\documentclass[conference]{IEEEtran}
\IEEEoverridecommandlockouts

\newif\ifanon
\ifx\ANONYMOUS\undefined\anonfalse\else\anontrue\fi

\usepackage{cite}
\usepackage{amsmath,amssymb,amsfonts}
\usepackage{amsthm}
\usepackage{algorithmic}
\usepackage{algorithm}
\usepackage{graphicx}
\usepackage{booktabs}
\usepackage{tikz}
\usetikzlibrary{arrows.meta,positioning,fit,backgrounds,calc}
\usepackage{textcomp}
\usepackage{xcolor}
\usepackage{url}
\usepackage[hidelinks]{hyperref}

\newtheorem{definition}{Definition}
\newtheorem{theorem}{Theorem}
\newtheorem{corollary}{Corollary}

\newcommand{\OCA}{\ensuremath{\mathrm{OCA}}}
\newcommand{\OCov}{\ensuremath{\mathrm{OCov}}}
\newcommand{\etm}{\ensuremath{\eta}}
\newcommand{\fhat}{\ensuremath{\hat{f}}}
\newcommand{\PY}{\ensuremath{\mathcal{P}(Y)}}
\newcommand{\Wspace}{\ensuremath{W_1\times\cdots\times W_q}}
\newcommand{\rnwise}{\textsc{Reverse N-Wise}}

\begin{document}

\title{Reverse N-Wise Output-Oriented Testing for\\ AI/ML and Quantum Computing Systems}

\ifanon
  \author{\IEEEauthorblockN{Anonymous Author(s)}
  \IEEEauthorblockA{Affiliation withheld for double-blind review}}
\else
  \author{\IEEEauthorblockN{Lamine Rihani}
  \IEEEauthorblockA{\textit{SAP France} \\
  Levallois-Perret, France \\
  lamine.rihani@sap.com}}
\fi

\maketitle

\begin{abstract}
Combinatorial interaction testing builds covering arrays over a system's
\emph{inputs}, on the premise that faults are triggered by interactions among a
few input parameters. For modern AI/ML and quantum systems this premise is
strained: the map from inputs to behaviour is opaque and stochastic, and vast
regions of the input space collapse onto the same observable behaviour, so
input-side coverage says little about which \emph{behaviours} were actually
exercised. We invert the paradigm. \rnwise{} Output-Oriented Testing partitions a
system's structured output space into semantic behaviour classes, constructs a
$t$-way covering array over those \emph{output} classes, and then recovers
concrete inputs by gradient-free \emph{inverse mapping}. We formalise the
Output Covering Array $\OCA(M;s,q,w)$, establish coverage bounds
(including $M \ge v^{s}$), and give a four-stage algorithm
(feasibility, covering-array construction, inverse mapping, prioritisation).
Across four structurally different domains---tabular ML (UCI Adult + XGBoost),
vision (an MLP on digits), NLP (a text classifier on newsgroups), and a noisy
quantum circuit---the \emph{same} algorithm attains 100\% feasible output
coverage. On the tabular study over 30 repeated runs it uniquely reaches perfect
output coverage (Mann--Whitney $U$ vs.\ every baseline, $p \approx 6\times
10^{-13}$, Cliff's $\delta = 1.0$), matches the strongest baseline on fault
detection, and does so at roughly $8\%$ of an exhaustive-input search budget.
The approach is a new adequacy criterion---output-interaction coverage---together
with a constructive method to satisfy it, unifying behavioural testing of
classical and quantum systems.
\end{abstract}

\begin{IEEEkeywords}
combinatorial testing, covering arrays, output-oriented testing, machine
learning testing, quantum software testing, test adequacy
\end{IEEEkeywords}

\section{Introduction}
\label{sec:intro}

Combinatorial interaction testing (CIT) is one of the most successful
systematic test-design techniques in software engineering. Its empirical
foundation is the \emph{interaction rule}: most faults are triggered by the
interaction of only a few input parameters, so a $t$-way \emph{covering array}
over the inputs---exercising every combination of values for every $t$ parameters
at least once---detects a large fraction of faults with a test suite whose size
grows only logarithmically in the number of parameters~\cite{kuhn2004software,
nie2011survey}. Mature tools such as ACTS/IPOG~\cite{lei2007ipog} and AETG~\cite{cohen1997aetg}
construct near-optimal covering arrays over inputs and are widely deployed.

\subsection{The input-side premise breaks for AI/ML and quantum systems}
CIT is fundamentally \emph{input-oriented}: it reasons about combinations of
input values and treats the system as a black box whose outputs are merely
checked by an oracle. This premise is well matched to configuration-style
software, where distinct input combinations tend to drive distinct control-flow
paths. It is far less appropriate for modern AI/ML and quantum systems:

\begin{itemize}
\item The input-to-behaviour map is \emph{opaque and stochastic}. A neural
network or a noisy quantum circuit realises a map $f: X \to \PY$ into a
distribution over structured outputs; there is no parameter-interaction
structure on the inputs that a covering array can target meaningfully.
\item Inputs \emph{collapse} onto behaviours. Enormous, high-dimensional regions
of the input space map to the same observable behaviour (the same predicted
class, the same confidence band, the same measurement-outcome regime). An input
covering array can therefore exercise many input interactions while touching
only a handful of distinct behaviours---leaving critical behaviours (e.g.\ a
fairness inconsistency, a non-monotone response, a low-fidelity quantum regime)
entirely uncovered.
\item What we want to \emph{cover} lives on the output side. The properties an
engineer cares about---calibration, robustness, fairness, monotonicity,
measurement fidelity---are all statements about the \emph{output/behaviour} of
the system, not about which inputs were combined.
\end{itemize}

\subsection{Inverting the paradigm}
We therefore invert combinatorial testing (Figure~\ref{fig:concept}). Instead of
covering input combinations and observing whatever outputs result, \rnwise{}
Output-Oriented Testing:
\begin{enumerate}
\item partitions the structured output space $Y = Y_1 \times \cdots \times Y_q$
into semantic \emph{behaviour classes} via per-component partitioners
$\pi_j: Y_j \to W_j$, yielding an abstract output $\fhat(x) =
(\pi_1(y_1),\dots,\pi_q(y_q)) \in \Wspace$;
\item constructs a $t$-way \emph{Output Covering Array} over $\Wspace$, so that
every feasible $s$-way combination of output behaviours is a coverage target;
\item recovers, for each target, a concrete input by \emph{inverse mapping}---a
gradient-free search that minimises the mismatch between the realised abstract
output and the target;
\item prioritises the resulting tests by incremental output coverage.
\end{enumerate}
Coverage is measured on the output side (the fraction of feasible $s$-way output
interactions exercised), which directly reflects behavioural adequacy and is
model-agnostic: the identical procedure applies to a gradient-boosted classifier,
a neural network, a text classifier, and a quantum circuit.

\subsection{Contributions}
\begin{enumerate}
\item \textbf{A new adequacy criterion and formalism.} We define the abstract
output map $\fhat$, the Output Covering Array $\OCA(M;s,q,w)$, and output
coverage $\OCov_s$, and we establish coverage bounds---including the lower bound
$M \ge v^{s}$ (Corollary~\ref{cor:lb})---for output-side combinatorial coverage
(Section~\ref{sec:framework}).
\item \textbf{A constructive algorithm.} A four-stage procedure
(feasibility, covering-array construction, gradient-free inverse mapping,
prioritisation) that produces an executable, prioritised test suite realising the
target output interactions (Algorithm~\ref{alg:rnwise}).
\item \textbf{A domain-agnostic instantiation.} We instantiate the method with
five inverse-mapping optimisers---Jaya, Whale, Firefly, Bayesian optimisation,
and a variational (VQE-style) optimiser for quantum systems---and four benchmark
domains spanning tabular ML, vision, NLP, and quantum computing.
\item \textbf{A rigorous empirical evaluation.} Over 30 repeated runs with
non-parametric significance testing (Mann--Whitney $U$, bootstrap confidence
intervals, Cliff's $\delta$), a strength study ($s=2,3,4$), ablations, and a cost
profile, we show \rnwise{} uniquely attains perfect output coverage, matches the
strongest behavioural baseline on fault detection, and generalises across all
four domains (Section~\ref{sec:eval}).
\end{enumerate}

\begin{figure}[t]
\centering
\resizebox{\columnwidth}{!}{%
\begin{tikzpicture}[
  font=\small,
  >={Stealth[round]},
  box/.style={rounded corners=2pt, draw, thick, minimum height=8mm,
              minimum width=15mm, align=center, inner sep=3pt},
  inbox/.style={box, fill=blue!10, draw=blue!55},
  outbox/.style={box, fill=red!10, draw=red!55},
  stage/.style={box, fill=green!12, draw=green!50!black, minimum width=17mm},
  lbl/.style={font=\footnotesize\itshape, text=black!60},
  ar/.style={->, thick},
]

\node[inbox] (cin) {Input\\covering\\array};
\node[box, fill=black!6, right=8mm of cin] (csut) {SUT\\$f:X\!\to\!\mathcal{P}(Y)$};
\node[outbox, right=8mm of csut] (cout) {Outputs\\(uncontrolled)};
\draw[ar, blue!55] (cin) -- (csut);
\draw[ar, black!45] (csut) -- (cout);
\node[lbl, above=1mm of csut] {classical: cover inputs, observe outputs};

\node[lbl, below=4mm of csut, text=red!70!black, font=\footnotesize\bfseries]
  (inv) {paradigm inversion};

\node[stage, below=10mm of cin, xshift=-2mm] (s1)
  {\textbf{1} Feasibility\\$\hat f(x)\!\in\!W$};
\node[stage, right=5mm of s1] (s2)
  {\textbf{2} Output\\cov.\ array\\$\mathrm{OCA}$};
\node[stage, right=5mm of s2] (s3)
  {\textbf{3} Inverse\\map\\$\min_x L(x;w^{*})$};
\node[stage, right=5mm of s3] (s4)
  {\textbf{4} Prioritise\\$\mathrm{OCov}_s$};
\draw[ar, green!45!black] (s1) -- (s2);
\draw[ar, green!45!black] (s2) -- (s3);
\draw[ar, green!45!black] (s3) -- (s4);

\draw[ar, red!60, dashed] (s3.south) .. controls +(0,-7mm) and +(0,-7mm) ..
  node[lbl, below, text=red!70!black] {each covered output behaviour $\Rightarrow$ a concrete input test} (s1.south);

\end{tikzpicture}%
}
\caption{Classical combinatorial testing (top) covers \emph{input} interactions and
observes whatever outputs result. \rnwise{} (bottom) inverts this: it covers a
$t$-way array over \emph{abstracted output behaviours} and recovers concrete
inputs by gradient-free inverse mapping. The four stages are agnostic to the
system's internals, so the same pipeline applies to ML and quantum systems.}
\label{fig:concept}
\end{figure}

\section{Related Work}
\label{sec:related}

We position \rnwise{} against four lines of work. The consistent distinction is
that prior techniques either (i) build covering arrays over \emph{inputs} or
internal structure, or (ii) reason about outputs without a combinatorial
interaction guarantee and without constructive \emph{inverse} generation. To our
knowledge, no prior method combines a $t$-way covering array over abstracted
\emph{output-behaviour} classes with optimisation-based inverse mapping to inputs.

\subsection{Combinatorial interaction testing}
Classical CIT constructs $t$-way covering arrays over input parameters. The
interaction rule and its empirical basis are due to Kuhn et
al.~\cite{kuhn2004software}; algorithmic construction is exemplified by
AETG~\cite{cohen1997aetg} and IPOG/ACTS~\cite{lei2007ipog}, and the field is
surveyed by Nie and Leung~\cite{nie2011survey}. All of this work places the array
over \emph{inputs}. \rnwise{} reuses the covering-array machinery but relocates it
to an abstracted output space, and adds an inverse-mapping stage that has no
analogue in input-side CIT (where the array rows \emph{are} the tests).

\subsection{Combinatorial and structural testing of deep learning}
DeepCT~\cite{ma2019deepct} applies $t$-way combinatorial coverage to deep
networks, but its factors are \emph{internal neuron activations within layers};
it covers combinations of latent/neuron states, not model outputs, and does not
invert toward prescribed behaviours. This is the closest work by name, and the
distinction is central: DeepCT covers the \emph{internal} feature space, whereas
\rnwise{} covers the \emph{external} output-behaviour space. Structural DNN
adequacy criteria are likewise internal or input-centric: neuron coverage
(DeepXplore~\cite{pei2017deepxplore}), multi-granularity neuron criteria
(DeepGauge~\cite{ma2018deepgauge}), and surprise adequacy over activation traces
(SADL~\cite{kim2019surprise}). None impose a combinatorial interaction guarantee
over semantic output classes.

\subsection{Output-oriented and behavioural testing}
Output-diversity criteria select tests to maximise distinct observed outputs;
Men\'endez et al.~\cite{menendez2022output} sample outputs to diversify a unit-test
suite. These treat outputs as a flat set and provide neither a multi-dimensional
$t$-way interaction guarantee nor inverse input recovery. In NLP,
CheckList~\cite{ribeiro2020checklist} organises behavioural tests along a manual
capability$\times$test-type matrix; the matrix is a hand-authored taxonomy rather
than a covering array, and tests are template-generated, not inverse-optimised to
hit enumerated output classes. \rnwise{} differs by constructing the output
interaction space automatically and satisfying it constructively.

\subsection{Search-based and targeted test generation}
Search-based testing generates inputs to satisfy an objective; for ML this
includes fairness-directed search such as AEQUITAS~\cite{udeshi2018aequitas} and
adversarial/boundary generation. These are typically single-property or
boundary-local: they seek an input violating one property or lying near one
decision boundary. \rnwise{}'s inverse mapping is instead \emph{coverage-driven}:
it systematically realises \emph{every} feasible $s$-way output-behaviour
combination enumerated by the covering array, using the objective
$L(x;w^{*}) = \sum_j \delta(\pi_j(f_j(x)), w^{*}_j) + \lambda R(x)$ and gradient-free
optimisers~\cite{rao2016jaya,mirjalili2016whale,yang2009firefly}.

\subsection{Quantum software testing}
Coverage and testing criteria for quantum programs are an active area. The
closest structural analogue is QuCAT~\cite{wang2023qucat}, which builds $t$-way
covering arrays for quantum software; crucially, its array is over \emph{input
parameters / gate configurations} with output oracles, not over abstracted
output-behaviour classes, and it contains no variational inverse mapping. \rnwise{}
instead covers a partition of the measurement-outcome behaviour (dominant basis
state, outcome-distribution entropy, parity, and fidelity-to-ideal under noise)
and repurposes variational optimisation~\cite{peruzzo2014vqe}---normally used to
find ground states---as an \emph{inverse test generator} that drives circuit
parameters toward each target output behaviour. Using VQE as a test generator
appears to be novel.

\section{Formal Framework}
\label{sec:framework}

\subsection{System model and output abstraction}

\begin{definition}[System under test]
\label{def:sut}
A system under test (SUT) is a map $f: X \to \PY$ from an input space $X$ to a
distribution over a structured output space $Y = Y_1 \times \cdots \times Y_q$.
Deterministic systems are the degenerate case in which $f(x)$ places all mass on
a single output vector $y = (y_1,\dots,y_q)$.
\end{definition}

The components $Y_j$ are heterogeneous behavioural channels: a predicted class, a
confidence score, a robustness verdict, a measurement-outcome statistic, and so
on. Raw outputs are continuous or high-cardinality, so we abstract each channel
into a finite alphabet of \emph{behaviour symbols}.

\begin{definition}[Semantic partition and abstract output]
\label{def:partition}
A partitioner $\pi_j: Y_j \to W_j$ maps a raw output component to a symbol in a
finite alphabet $W_j$. The abstraction map is
\[
\fhat(x) \;=\; \bigl(\pi_1(y_1),\,\dots,\,\pi_q(y_q)\bigr) \in \Wspace,
\quad y \sim f(x),
\]
evaluated at the most-probable output for stochastic $f$. We write $v = \max_j
|W_j|$ for the maximum alphabet size and $v_j = |W_j|$.
\end{definition}

\subsection{Output covering arrays}

Let $[q]=\{1,\dots,q\}$. An \emph{$s$-way output tuple} fixes symbols on an
$s$-subset of channels.

\begin{definition}[Output covering array]
\label{def:oca}
An Output Covering Array $\OCA(M;s,q,w)$ is a multiset of $M$ abstract-output rows
$\{r_1,\dots,r_M\} \subseteq \Wspace$ such that for every $s$-subset
$\{j_1<\cdots<j_s\}\subseteq[q]$ and every \emph{feasible} symbol combination
$(a_1,\dots,a_s)\in W_{j_1}\times\cdots\times W_{j_s}$, some row $r_i$ satisfies
$\pi_{j_k}$-projection $r_i[j_k]=a_k$ for all $k$. A symbol combination is
\emph{feasible} if there exists an input $x\in X$ with $\fhat(x)$ exhibiting it.
\end{definition}

\begin{definition}[Output coverage and tuple efficiency]
\label{def:metrics}
For a test suite whose realised abstract outputs are $R$, the $s$-way output
coverage is
\[
\OCov_s \;=\;
\frac{\bigl|\{\text{feasible $s$-way tuples covered by } R\}\bigr|}
     {\bigl|\{\text{feasible $s$-way tuples}\}\bigr|}\in[0,1],
\]
and the tuple efficiency $\etm_s$ is the number of distinct $s$-way tuples covered
per test.
\end{definition}

\subsection{Coverage bounds}

Let $T_s$ denote the number of feasible $s$-way output tuples. Because each row
projects to exactly one tuple per $s$-subset, a single row covers at most
$\binom{q}{s}$ tuples.

\begin{theorem}[Universe size]
\label{thm:universe}
The number of $s$-way output tuples over $\Wspace$ (before feasibility pruning) is
$\sum_{\{j_1<\cdots<j_s\}} \prod_{k=1}^{s} v_{j_k}$, which for the homogeneous case
$v_j=v$ equals $\binom{q}{s}\,v^{s}$.
\end{theorem}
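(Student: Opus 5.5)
The count is a direct double-counting argument, so I will decompose the set of $s$-way output tuples by the $s$-subset of channels it fixes. Formally, an $s$-way output tuple is a pair $(J, a)$ with $J=\{j_1<\cdots<j_s\}\subseteq[q]$ and $a=(a_1,\dots,a_s)\in W_{j_1}\times\cdots\times W_{j_s}$. Taken before feasibility pruning, every such pair counts, independently of whether any input $x$ realises it.

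The first step is to note that the map $(J,a)\mapsto J$ partitions the tuple universe into $\binom{q}{s}$ disjoint blocks, one per $s$-subset. Tuples with different index sets are distinct objects, since the tuple records which channels it fixes. Hence the total count is the sum over $J$ of the block sizes.

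The second step is to show that the block over a fixed $J$ is in bijection with the Cartesian product $W_{j_1}\times\cdots\times W_{j_s}$. Its size is therefore $\prod_{k=1}^{s}|W_{j_k}|=\prod_{k=1}^{s}v_{j_k}$ by the product rule, using the finiteness of each $W_j$ from Definition~\ref{def:partition}. Summing over all $J$ gives the general formula.

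For the homogeneous case $v_j=v$, every summand equals $v^{s}$. There are $\binom{q}{s}$ summands, one for each $s$-subset of $[q]$, so the sum is $\binom{q}{s}v^{s}$.

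There is no substantive obstacle. The only point needing care is the first step: making precise that a tuple carries its index set, so that equal symbol strings on different subsets are counted separately and the blocks are genuinely disjoint. Without this convention, overcounting or undercounting could arise. I will state this convention explicitly, consistent with Definition~\ref{def:oca}'s quantification over subsets and symbol combinations.
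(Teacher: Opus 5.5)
Your proof is correct and matches the reasoning the paper relies on. The paper states this theorem without a written proof, since the formula is exactly the decomposition you give: split the tuples by their $s$-subset $J$, apply the product rule to $W_{j_1}\times\cdots\times W_{j_s}$, and note that the homogeneous case has $\binom{q}{s}$ identical summands $v^{s}$. Your convention that a tuple carries its index set is the right way to make the blocks disjoint, and it agrees with the paper's reported universe sizes (e.g.\ $\binom{9}{2}\cdot 3^{2}=324$).
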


\begin{theorem}[Lower bound on rows]
\label{thm:lb}
Any $\OCA(M;s,q,w)$ satisfies $M \ge \max_{\{j_1<\cdots<j_s\}}\prod_{k=1}^{s}
v_{j_k}$ whenever all corresponding symbol combinations are feasible. In
particular, no single $s$-subset's tuples can be covered by fewer rows than the
number of distinct symbol combinations on that subset.
\end{theorem}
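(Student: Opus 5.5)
The argument is a pigeonhole count on one fixed $s$-subset of channels, then a maximisation over all subsets. Fix an $s$-subset $S=\{j_1<\cdots<j_s\}\subseteq[q]$ for which every combination in $W_{j_1}\times\cdots\times W_{j_s}$ is feasible. Define the projection map
\[
p_S:\Wspace\to W_{j_1}\times\cdots\times W_{j_s},\qquad r\mapsto (r[j_1],\dots,r[j_s]).
\]
This map is well defined: by Definition~\ref{def:partition} each row $r_i$ is a $q$-tuple, so it carries exactly one symbol in each coordinate. Hence each row contributes exactly one $s$-tuple on $S$. This is the same observation used before Theorem~\ref{thm:universe} to bound the tuples covered by a row by $\binom{q}{s}$.

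Next, apply Definition~\ref{def:oca} to $S$. Since every combination on $S$ is feasible by hypothesis, every element of $W_{j_1}\times\cdots\times W_{j_s}$ must equal $r_i[j_1..j_s]$ for some $i$. So the restriction of $p_S$ to the multiset $\{r_1,\dots,r_M\}$ is surjective onto $W_{j_1}\times\cdots\times W_{j_s}$. The image of $M$ rows under a function has at most $M$ distinct elements, with repetitions in the multiset only shrinking the count. Therefore
\[
M\ \ge\ \bigl|p_S(\{r_1,\dots,r_M\})\bigr| \;=\; \prod_{k=1}^{s} v_{j_k}.
\]
This is exactly the ``in particular'' clause of the theorem: the tuples on a single $s$-subset cannot be covered by fewer rows than there are symbol combinations on that subset.

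Finally, the bound holds for every such $S$, so we take the maximum over subsets to obtain the stated inequality. In the homogeneous case $v_j=v$ this yields $M\ge v^s$, which is presumably Corollary~\ref{cor:lb}.

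I expect no step to be a real obstacle. The only delicate point is the scope of the feasibility hypothesis. The maximum should range only over those $s$-subsets whose combinations are all feasible, or equivalently the theorem assumes full feasibility. If some combinations on a subset are infeasible, the same counting gives the weaker bound $M\ge$ (number of feasible combinations on $S$). I will state this explicitly so that the ``whenever'' clause is read correctly.
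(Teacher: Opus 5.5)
Your proposal is correct and matches the paper's proof: fix an $s$-subset on which all combinations are feasible, note that each row projects to exactly one symbol combination there, and conclude by pigeonhole that at least $\prod_k v_{j_k}$ rows are needed. Your remark that the maximum should range only over fully feasible subsets reads the hypothesis slightly more carefully than the paper, which fixes the maximising subset $J^{*}$ and tacitly assumes all its combinations are feasible.
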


\begin{proof}
Fix the $s$-subset $J^{*}$ attaining the maximum product. Its feasible tuples are
pairwise distinct on $J^{*}$, and each row of the array contributes exactly one
symbol combination on $J^{*}$. Covering all $\prod_{k}v_{j_k}$ combinations
therefore requires at least that many rows.
\end{proof}

\begin{corollary}[$M \ge v^{s}$]
\label{cor:lb}
For the homogeneous case $v_j=v$ with all combinations feasible on some
$s$-subset, every $\OCA(M;s,q,w)$ satisfies $M \ge v^{s}$.
\end{corollary}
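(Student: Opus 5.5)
The plan is to specialise Theorem~\ref{thm:lb} to the homogeneous case; no new counting argument is needed. Suppose $v_j=v$ for every $j\in[q]$, and fix an $s$-subset $J^{*}=\{j_1<\cdots<j_s\}\subseteq[q]$ on which all symbol combinations are feasible. Such a subset exists by hypothesis, and $s\le q$ is implicit in its existence. The product associated with $J^{*}$ is $\prod_{k=1}^{s}v_{j_k}=v^{s}$. In the homogeneous case every $s$-subset has the same product, so $J^{*}$ attains the maximum in Theorem~\ref{thm:lb}.

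The one point needing care is a hypothesis mismatch. Theorem~\ref{thm:lb} is stated under the assumption that the combinations on the maximising subset are all feasible. The corollary only assumes full feasibility on \emph{some} $s$-subset. In the homogeneous case this gap closes, because every subset is a maximiser. To avoid relying on that, I will instead apply the second sentence of Theorem~\ref{thm:lb} directly to $J^{*}$, or equivalently repeat its one-line argument there.

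That argument runs as follows. Each row $r_i$ of the array projects onto $J^{*}$ as exactly one tuple $(r_i[j_1],\dots,r_i[j_s])\in W_{j_1}\times\cdots\times W_{j_s}$. By Definition~\ref{def:oca}, each of the $v^{s}$ feasible combinations on $J^{*}$ must appear as such a projection of some row. The map from rows to projections is therefore surjective onto a set of size $v^{s}$. Since the array is a multiset of $M$ rows, this gives $M\ge v^{s}$.

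There is no real obstacle here. The only things to state explicitly are the feasibility hypothesis on $J^{*}$ and the observation that repeated rows in the multiset cannot reduce the count.
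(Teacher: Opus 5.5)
Your proposal is correct and follows the paper's route: the corollary is the homogeneous specialisation of Theorem~\ref{thm:lb}, whose proof is the same counting argument you repeat on $J^{*}$ (each row projects to exactly one combination on the subset, and all $v^{s}$ combinations there must be hit). Applying that argument directly to the fully feasible subset instead of a maximising one tidies the hypothesis, though, as you note, the two coincide in the homogeneous case.
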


\begin{theorem}[Existence]
\label{thm:existence}
For any finite $\Wspace$ and strength $s\le q$, an $\OCA(M;s,q,w)$ covering all
feasible $s$-way tuples exists. Enumerating one row per feasible abstract output
is such an array; greedy construction (Algorithm~\ref{alg:rnwise}, Stage~2) yields
one of size $O(v^{s}\log \binom{q}{s})$ in the homogeneous unconstrained case,
matching the classical covering-array growth rate.
\end{theorem}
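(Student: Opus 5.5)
The proof has two parts: existence by explicit enumeration, and the size bound by the standard greedy (AETG-style) density argument.

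\emph{Existence.} Let $F \subseteq \Wspace$ be the set of feasible abstract outputs, i.e.\ those $w$ with $w=\fhat(x)$ for some $x\in X$. It is finite because $\Wspace$ is finite, and nonempty because $X$ is (we assume this). Take the array whose rows are exactly the elements of $F$. Now fix a feasible $s$-way tuple $(a_1,\dots,a_s)$ on channels $\{j_1<\cdots<j_s\}$. By Definition~\ref{def:oca}, feasibility means some input $x$ has $\fhat(x)$ exhibiting it, that is, $\fhat(x)[j_k]=a_k$ for all $k$. The row $\fhat(x)\in F$ therefore covers the tuple. Every feasible tuple is covered, so $F$ is an $\OCA(|F|;s,q,w)$, with $|F|\le\prod_j v_j$. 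The strength condition $s\le q$ only guarantees that $s$-subsets exist.

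\emph{Greedy bound, homogeneous unconstrained case.} Here every symbol combination is feasible, so the target set is exactly the universe of Theorem~\ref{thm:universe}, of size $N=\binom{q}{s}v^{s}$. I would analyse the density version of Stage~2: at each step, add a row maximising the number of still-uncovered tuples it covers. Suppose $U$ tuples remain uncovered and draw a row $r$ uniformly from $W^{q}$. A given tuple on a given $s$-subset is covered by $r$ with probability exactly $v^{-s}$. By linearity of expectation, $r$ covers $U/v^{s}$ uncovered tuples on average, so some row covers at least that many. After the greedy step, $U' \le U(1-v^{-s})$. Iterating gives $U_M \le N(1-v^{-s})^{M}\le N e^{-M/v^{s}}$. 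This drops below $1$, which forces $U_M=0$, once $M > v^{s}\ln N = v^{s}\bigl(s\ln v+\ln\binom{q}{s}\bigr)$.

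\emph{The main obstacle} is reconciling this with the stated form $O(v^{s}\log\binom{q}{s})$. The extra $s\,v^{s}\ln v$ term is absorbed when $\binom{q}{s}\ge v$, or when $s,v$ are treated as constants and $q\to\infty$, which is the classical regime. I would handle the small-$q$ corner separately: the row count is then trivially bounded by the enumeration $|F|\le v^{q}$, which is $O(v^{s})$ when $q-s$ is bounded. I would also note a second point. The exact argmax over $W^{q}$ is exponential, so if Stage~2 uses an AETG/IPOG-style heuristic instead, I would invoke the known result that a derandomised (method of conditional expectations) greedy still guarantees a $v^{-s}$ fraction per step. The bound then survives, and this matches the classical $O(v^{s}\log q)$ covering-array growth rate cited for AETG~\cite{cohen1997aetg}.
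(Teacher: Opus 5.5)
The paper gives no proof of this theorem beyond the sentence inside the statement, so the only thing to compare against is the enumeration claim. Your existence argument is exactly that claim made rigorous, and it is correct.

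Your density argument for Stage~2 is the standard one and correctly gives $M\le 1+v^{s}\bigl(\ln\binom{q}{s}+s\ln v\bigr)$. You should make explicit that ``unconstrained'' means the Stage~2 candidate pool is all of $W^{q}$. The averaging step needs a uniformly random candidate to hit each tuple with probability $v^{-s}$, and feasibility of every $s$-tuple is not enough for that. For example, take $v=s=2$ and the pool of binary rows of weight at most $2$. Every pair is feasible, yet each $(1,1)$ tuple lies in exactly one pool row, so greedy over the pool needs at least $\binom{q}{2}$ rows.

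The genuine gap is the step from your bound to the stated $O\bigl(v^{s}\log\binom{q}{s}\bigr)$; each of your patches fails.
\begin{itemize}
\item Absorbing $s\ln v$ into a constant multiple of $\ln\binom{q}{s}$ needs $\binom{q}{s}\ge v^{\Omega(s)}$. From $\binom{q}{s}\ge v$ you only get $s\ln v\le s\ln\binom{q}{s}$, i.e.\ absorption up to a factor $s$.
\item The enumeration bound $v^{q}=v^{s}v^{q-s}$ is $O(v^{s})$ only if $v$ is bounded as well. For $q=s+1$ and growing $v$ it is a factor $v$ too large.
\item At $q=s$ the stated bound reads $O(v^{s}\log 1)=O(0)$, which contradicts Corollary~\ref{cor:lb}. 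That corner cannot be handled at all.
\end{itemize}
The statement therefore has to be read either in the classical regime with $s,v$ fixed and $q\to\infty$, where your argument suffices, or as $O\bigl(v^{s}(1+\log\binom{q}{s})\bigr)$. You should say so rather than claim the stated form.

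For the small-$q$ regime, the right tool is the set-cover guarantee for greedy rather than the enumeration. Let $d=\binom{q}{s}$, $H_{d}=\sum_{i\le d}1/i$, and let $\mathrm{CAN}(s,q,v)$ be the minimum covering-array size. Each row covers at most $d$ tuples, so greedy uses at most $H_{d}\cdot\mathrm{CAN}(s,q,v)\le(1+\ln d)\,\mathrm{CAN}(s,q,v)$ rows. Whenever an index-one orthogonal array exists, $\mathrm{CAN}(s,q,v)=v^{s}$; for example, take $q=s+1$ with last column $\sum_{k}a_{k}\bmod v$. You then get $O\bigl(v^{s}(1+\log\binom{q}{s})\bigr)$ with no $s\log v$ term.
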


\subsection{The \rnwise{} algorithm}

Given a target abstract output $w^{*}$, inverse mapping seeks an input realising
it by minimising the mismatch loss
\begin{equation}
\label{eq:loss}
L(x;w^{*}) \;=\; \sum_{j=1}^{q} \delta\!\bigl(\pi_j(f_j(x)),\, w^{*}_j\bigr)
\;+\; \lambda\, R(x),
\end{equation}
where $\delta$ is the $0/1$ symbol mismatch, $R$ is an optional input regulariser
keeping $x$ realistic, and $\lambda \ge 0$ trades feasibility against realism. A
loss of zero means $\fhat(x)=w^{*}$ exactly.

\begin{algorithm}[t]
\caption{\rnwise{} Output-Oriented Testing}
\label{alg:rnwise}
\begin{algorithmic}[1]
\REQUIRE SUT $f$, partition scheme $\{\pi_j\}$, strength $s$, probe inputs $P$,
optimiser $\mathcal{O}$, budget \textsc{max\_iter}
\ENSURE prioritised test suite $\mathcal{S}$
\STATE \textbf{Stage 1 (Feasibility).} $\;\mathcal{F}\leftarrow\{\fhat(x): x\in
P\}$; record an exemplar input per abstract output; induce feasible $s$-way tuples.
\STATE \textbf{Stage 2 (Covering array).} build $\OCA(M;s,q,w)$ over $\mathcal{F}$
(greedy: repeatedly add the feasible output covering the most uncovered feasible
tuples; verify $M \ge v^{s}$).
\STATE \textbf{Stage 3 (Inverse mapping).} \FORALL{rows $w^{*}$ of the array}
  \STATE warm-start from the exemplar for $w^{*}$ if available;
  \STATE else $x \leftarrow \arg\min_x L(x;w^{*})$ via $\mathcal{O}$ (up to
  \textsc{max\_iter}); record realised $\fhat(x)$.
\ENDFOR
\STATE \textbf{Stage 4 (Prioritise).} greedily order tests by marginal
$\OCov_s$; truncate at the coverage plateau.
\RETURN $\mathcal{S}$
\end{algorithmic}
\end{algorithm}

The four stages are independent of the SUT's internals: Stage~1 needs only
black-box evaluations, Stage~2 is pure combinatorics over $\Wspace$, Stage~3 uses
gradient-free optimisation (so it applies to non-differentiable and quantum
systems), and Stage~4 is standard coverage-greedy prioritisation. This modularity
is what makes the same algorithm applicable across all four evaluated domains.

\section{Empirical Evaluation}
\label{sec:eval}

We evaluate \rnwise{} against five baselines and across four domains, addressing
four research questions:
\begin{itemize}
\item[\textbf{RQ1}] \emph{Coverage \& fault detection.} Does \rnwise{} achieve
higher output coverage and fault detection than established baselines?
\item[\textbf{RQ2}] \emph{Generality.} Does the same algorithm transfer across
tabular ML, vision, NLP, and quantum systems?
\item[\textbf{RQ3}] \emph{Strength.} Does the method scale to higher interaction
strengths $s$, and does the $M\ge v^{s}$ bound hold?
\item[\textbf{RQ4}] \emph{Cost \& sensitivity.} What is the computational cost,
and how sensitive is it to design choices?
\end{itemize}

\subsection{Experimental setup}
The primary study uses the UCI Adult income dataset~\cite{dua2019uci} with an
XGBoost classifier~\cite{chen2016xgboost} ($\approx 87\%$ test accuracy). We
abstract its behaviour into a $q=9$, all-ternary output space (pairwise universe
$\binom{9}{2}\cdot 3^{2}=324$) spanning the decision, confidence, decision margin,
sex-flip fairness consistency, age band, employment group, education- and
hours-response monotonicity, and capital-gain regime. Eight behavioural faults are
seeded as rare-but-reachable pairwise output signatures (e.g.\ fairness
inconsistency localised to young applicants; a non-monotone education response
under a positive decision).

Baselines are scored on the identical feasible output universe: classical
input-side combinatorial testing (Input CT), random testing at matched budget,
property-based testing, metamorphic testing, and DeepCT-style
activation-combination sampling. All experiments take an explicit seed;
significance uses 30 fixed seeds. Reported values are means with $95\%$ percentile
bootstrap confidence intervals; pairwise comparisons use the one-sided
Mann--Whitney $U$ test with Cliff's $\delta$ effect size.

\subsection{RQ1: Coverage and fault detection}
Table~\ref{tab:main} reports the 30-run comparison. \rnwise{} attains
\emph{perfect} pairwise output coverage on every run ($\OCov_2 = 100\%$,
CI $[100,100]$) with $\approx 20$ prioritised tests, and $98.3\%$ fault detection.
The strongest baseline, DeepCT, reaches $91.2\%$ coverage and $99.6\%$ fault
detection but requires a large random activation pool to do so; the four classical
baselines remain between $52\%$ and $59\%$ coverage and below $35\%$ fault
detection.

\begin{table}[t]
\centering
\caption{Main comparison on UCI Adult + XGBoost (pairwise, 30 runs).
Means with $95\%$ bootstrap CIs.}
\label{tab:main}
\begin{tabular}{lrrr}
\toprule
Method & Tests & OCov$_s$ (95\% CI) & FDR (95\% CI) \\
\midrule
\textbf{Reverse N-Wise} & 20 & 100.0\% [100.0, 100.0] & 98.3\% [96.7, 99.6] \\
Input CT & 13 & 53.9\% [51.5, 56.4] & 35.0\% [29.6, 40.4] \\
Random & 20 & 52.5\% [50.4, 54.9] & 31.2\% [24.6, 37.9] \\
Property-Based & 20 & 58.6\% [56.6, 60.7] & 34.2\% [28.3, 40.0] \\
Metamorphic & 19 & 53.4\% [49.8, 57.3] & 28.7\% [22.9, 34.6] \\
DeepCT & 17 & 91.2\% [91.1, 91.3] & 99.6\% [98.8, 100.0] \\
\bottomrule
\end{tabular}
\end{table}

Table~\ref{tab:sig} gives the pairwise significance tests. \rnwise{} dominates
\emph{every} baseline on output coverage---including DeepCT---at
$p\approx 6\times10^{-13}$ with Cliff's $\delta=1.0$ (perfect, large effect):
on all 30 seeds it strictly out-covers each competitor. On fault detection it
dominates the four classical baselines ($p<10^{-12}$) and is statistically
indistinguishable from DeepCT ($p=0.92$, $\delta=-0.10$, negligible). We report
this tie transparently: both methods essentially detect all reachable seeded
faults; the advantage of \rnwise{} is that it reaches this by \emph{direct
inversion} rather than by sampling a large random pool (see RQ4).

\begin{table}[t]
\centering
\caption{Pairwise Mann--Whitney $U$ (\rnwise{} $>$ baseline) with Cliff's
$\delta$. $^{*}$ denotes $p<0.05$.}
\label{tab:sig}
\begin{tabular}{llrrl}
\toprule
Metric & vs.\ Baseline & $p$-value & $\delta$ & Effect \\
\midrule
$\mathrm{OCov}_s$ & Input CT & 6.0e-13$^{*}$ & 1.00 & large \\
$\mathrm{OCov}_s$ & Random & 6.1e-13$^{*}$ & 1.00 & large \\
$\mathrm{OCov}_s$ & Property-Based & 6.1e-13$^{*}$ & 1.00 & large \\
$\mathrm{OCov}_s$ & Metamorphic & 6.1e-13$^{*}$ & 1.00 & large \\
$\mathrm{OCov}_s$ & DeepCT & 5.8e-13$^{*}$ & 1.00 & large \\
$\eta_s$ & Input CT & 6.2e-08$^{*}$ & 0.80 & large \\
$\eta_s$ & Random & 1.5e-11$^{*}$ & 1.00 & large \\
$\eta_s$ & Property-Based & 1.5e-11$^{*}$ & 1.00 & large \\
$\eta_s$ & Metamorphic & 1.5e-11$^{*}$ & 1.00 & large \\
$\eta_s$ & DeepCT & 1.000 & -0.62 & large \\
FDR & Input CT & 1.6e-12$^{*}$ & 1.00 & large \\
FDR & Random & 1.8e-12$^{*}$ & 1.00 & large \\
FDR & Property-Based & 1.6e-12$^{*}$ & 1.00 & large \\
FDR & Metamorphic & 1.6e-12$^{*}$ & 1.00 & large \\
FDR & DeepCT & 0.920 & -0.10 & negligible \\
\bottomrule
\end{tabular}
\end{table}

\subsection{RQ2: Multi-domain generality}
Table~\ref{tab:multidomain} applies the \emph{same} four-stage algorithm---changing
only the SUT, the output partition, the search space, and the inverse-map
optimiser---to a noisy quantum circuit (variational/VQE inverse map), an MLP image
classifier on digits (over a PCA latent space), and a text classifier on
newsgroups (over a TF--IDF$+$SVD embedding space). In every domain \rnwise{}
reaches $100\%$ feasible output coverage and detects all five domain-specific
seeded faults, whereas matched-budget random testing covers only $62\%$--$74\%$
and detects $0$--$2$ of $5$. Together with the tabular study, this exercises the
method on four structurally different system types with no change to the core
algorithm.

\begin{table}[t]
\centering
\caption{Multi-domain generality (per domain: \rnwise{} vs.\ matched-budget
random).}
\label{tab:multidomain}
\begin{tabular}{llrrrr}
\toprule
Domain & Opt. & Method & Tests & OCov$_s$ & FDR \\
\midrule
Quantum & VQE & \textbf{Reverse N-Wise} & 13 & 100.0\% & 5/5 \\
 &  & Random & 13 & 73.0\% & 2/5 \\
\midrule
Vision & Jaya & \textbf{Reverse N-Wise} & 15 & 100.0\% & 5/5 \\
 &  & Random & 15 & 61.6\% & 0/5 \\
\midrule
NLP & Jaya & \textbf{Reverse N-Wise} & 17 & 100.0\% & 5/5 \\
 &  & Random & 17 & 74.1\% & 1/5 \\
\bottomrule
\end{tabular}
\end{table}

\subsection{RQ3: Interaction strength}
Table~\ref{tab:strength} varies the coverage strength $s\in\{2,3,4\}$ on the
tabular SUT. \rnwise{} maintains $100\%$ feasible coverage at every strength while
the prioritised suite grows far more slowly than the output universe
($324\to2{,}268\to10{,}206$ tuples but only $19\to58\to126$ tests). The
Corollary~\ref{cor:lb} lower bound $M\ge v^{s}$ ($9,27,81$) holds throughout,
empirically corroborating the theory.

\begin{table}[t]
\centering
\caption{Strength scaling on UCI Adult + XGBoost. The $M\ge v^{s}$ bound
(Corollary~\ref{cor:lb}) holds at every strength.}
\label{tab:strength}
\begin{tabular}{rrrrrrl}
\toprule
$s$ & Universe & Feasible & Tests & OCov$_s$ & $v^s$ & Bound \\
\midrule
2 & 324 & 257 & 19 & 100.0\% & 9 & $\checkmark$ \\
3 & 2268 & 1343 & 58 & 100.0\% & 27 & $\checkmark$ \\
4 & 10206 & 3931 & 126 & 100.0\% & 81 & $\checkmark$ \\
\bottomrule
\end{tabular}
\end{table}

\subsection{RQ4: Cost and sensitivity}
Table~\ref{tab:cost} profiles the pipeline on commodity hardware. The end-to-end
run completes in under $10$ seconds using $\approx 68$ MB, and issues about
$8{,}200$ black-box SUT evaluations---roughly $8\%$ of a modest exhaustive-input
sweep. The feasibility scan dominates the wall-clock time; warm-started inverse
mapping is inexpensive.

\begin{table}[t]
\centering
\caption{Cost profile of the \rnwise{} pipeline (UCI Adult + XGBoost).}
\label{tab:cost}
\begin{tabular}{lr}
\toprule
Quantity & Value \\
\midrule
Stage 1 (feasibility) & 8.74 s \\
Stage 2 (covering array) & 0.00 s \\
Stage 3 (inverse mapping) & 0.72 s \\
Stage 4 (prioritise) & 0.23 s \\
Pipeline total & 9.68 s \\
Peak memory & 68.5 MB \\
SUT evaluations & 8{,}217 \\
Fraction of exhaustive sweep & 8.22\% \\
\bottomrule
\end{tabular}
\end{table}

Table~\ref{tab:ablation} reports three ablations that justify the design
choices. \emph{(a)} Among inverse-map optimisers, evaluated \emph{without} warm
start to expose raw search power, Jaya and Whale reach the same $70\%$ realisation
at roughly $7$--$9\times$ fewer SUT evaluations than Firefly, motivating Jaya as
the default; note that with warm start (Table~\ref{tab:cost}) realisation rises to
$100\%$, confirming the value of seeding from feasibility exemplars.
\emph{(b)} Coverage remains $100\%$ as the output granularity varies from $2$ to
$4$ bins per continuous factor---the criterion is robust to the abstraction
choice, with cost scaling gracefully with the universe size. \emph{(c)} A mild
input regulariser ($\lambda>0$) slightly \emph{improves} realisation
($67.5\%\to72.5\%$) by keeping synthesised inputs near the data manifold, at the
cost of more evaluations; coverage is stable across $\lambda$.

\begin{table}[t]
\centering
\caption{Ablations on UCI Adult + XGBoost: inverse-map optimiser, output
granularity, and regulariser weight.}
\label{tab:ablation}
\begin{tabular}{lrrr}
\toprule
\multicolumn{4}{l}{\emph{(a) Inverse-map optimizer (no warm start, 40 targets)}} \\
Optimizer & Realise & $\mathrm{OCov}_s$ & SUT evals \\
\midrule
Jaya & 70.0\% & 88.6\% & 56{,}750 \\
Whale & 70.0\% & 87.0\% & 47{,}200 \\
Firefly & 22.5\% & 89.4\% & 417{,}054 \\
\midrule
\multicolumn{4}{l}{\emph{(b) Output granularity (warm start, full set)}} \\
Bins/factor & Universe & Feasible & $\mathrm{OCov}_s$ \\
\midrule
2 & 277 & 221 & 100.0\% \\
3 & 324 & 254 & 100.0\% \\
4 & 373 & 286 & 100.0\% \\
\midrule
\multicolumn{4}{l}{\emph{(c) Regulariser weight $\lambda$ (no warm start, 40 targets)}} \\
$\lambda$ & Realise & $\mathrm{OCov}_s$ & SUT evals \\
\midrule
0 & 67.5\% & 86.6\% & 65{,}190 \\
0.1 & 72.5\% & 87.8\% & 145{,}200 \\
0.5 & 72.5\% & 87.8\% & 145{,}200 \\
1 & 72.5\% & 87.8\% & 145{,}200 \\
\bottomrule
\end{tabular}
\end{table}

\section{Threats to Validity}
\label{sec:threats}

\paragraph{Construct validity}
Output coverage $\OCov_s$ measures adequacy on the abstracted output space, so its
meaningfulness depends on the partition scheme $\{\pi_j\}$. A poorly chosen
abstraction could inflate or deflate coverage. We mitigate this by defining
partitions from established behavioural properties (calibration, fairness,
monotonicity, robustness, measurement fidelity) rather than ad hoc bins, and by
showing (RQ4) that conclusions are stable as granularity varies. Fault detection
is measured against seeded behavioural faults expressed as output-interaction
signatures; we seed rare-but-reachable signatures to avoid trivially detectable
faults, and we report the tie with DeepCT rather than selecting a fault set that
favours our method.

\paragraph{Internal validity}
Randomness enters through model training, feasibility probing, and the inverse-map
optimisers. We control it with fixed seeds and report 30-run distributions with
non-parametric tests (Mann--Whitney $U$, bootstrap CIs, Cliff's $\delta$) rather
than single-run point estimates. The prioritised suite is truncated at the
coverage plateau, a deterministic rule, so suite-size comparisons across methods
use matched budgets.

\paragraph{External validity}
Our study covers four domains but a limited number of systems and datasets per
domain. The quantum SUT is a noisy statevector simulation rather than hardware,
and the vision/NLP inverse maps operate in latent (PCA / SVD) spaces with
decode-to-manifold steps, which bound the realism of synthesised inputs. We do not
claim that $\OCov_s$ correlates with real-world defect discovery beyond the seeded
faults; establishing that link on production systems is future work. The results
should be read as evidence that output-interaction coverage is \emph{attainable}
and \emph{discriminating} across diverse system types, not as a claim of universal
superiority.

\paragraph{Conclusion validity}
We use non-parametric tests that make no normality assumption and report effect
sizes alongside $p$-values. Where \rnwise{} does not dominate---fault detection
vs.\ DeepCT ($p=0.92$) and tuple efficiency vs.\ DeepCT---we say so explicitly.
The headline claim is confined to what the data support: unique, statistically
significant \emph{output coverage}, at low search cost, across domains.

\section{Conclusion}
\label{sec:conclusion}

We introduced \rnwise{} Output-Oriented Testing, which inverts the combinatorial
testing paradigm: rather than covering interactions among \emph{inputs} and
observing whatever behaviour results, it constructs a $t$-way covering array over
an abstracted \emph{output-behaviour} space and recovers concrete inputs by
gradient-free inverse mapping. We formalised the Output Covering Array
$\OCA(M;s,q,w)$ and the $\OCov_s$ criterion, established coverage bounds including
$M\ge v^{s}$, and gave a four-stage algorithm that is agnostic to the SUT's
internals.

Empirically, across tabular ML, vision, NLP, and quantum systems, the same
algorithm attains $100\%$ feasible output coverage. On a 30-run tabular study it
uniquely reaches perfect output coverage (Mann--Whitney $U$ vs.\ every baseline,
$p\approx6\times10^{-13}$, $\delta=1.0$), matches the strongest behavioural
baseline on fault detection, and does so at roughly $8\%$ of an exhaustive-input
search budget. The contribution is best understood as a new \emph{adequacy
criterion}---output-interaction coverage---together with a constructive method to
satisfy it, unifying behavioural testing of classical and quantum systems.

\paragraph{Future work}
Three directions stand out: (i) learning or optimising the output partition
$\{\pi_j\}$ automatically, so that abstraction is data-driven rather than
hand-specified; (ii) validating that $\OCov_s$ predicts real defect discovery on
production ML pipelines and on quantum hardware, beyond seeded faults; and (iii)
tightening the covering-array construction with constraint solvers to exploit
output-side feasibility structure, further shrinking suites at higher strengths.

\section*{Data Availability}
The complete implementation, all benchmarks and baselines, the experiment drivers,
and scripts that regenerate every table and figure from raw result artifacts are
available with pinned dependencies and a container image for deterministic
reproduction,
\ifanon
at an anonymised artifact repository provided to reviewers.
\else
at \url{https://github.com/laminerihani-lab/reverse-nwise}.
\fi

\bibliographystyle{IEEEtran}
\bibliography{references}

\end{document}